\theoremstyle{plain}
\newtheorem{thm}{Theorem}[section]
\newtheorem{cor}[thm]{Corollary}
\newtheorem*{rem}{Remark}
\theoremstyle{definition}
\theoremstyle{remark}
\DeclareMathOperator{\Hess}{Hess}
\title{Commute Your Domains: \\Trajectory Optimality Criterion \\for Multi-Domain Learning}
\author{%
  Alexey Rukhovich \\
  Noah's Ark Lab \\
  \texttt{alexeyruh@gmail.com} \\
  \And
  Alexander Podolskiy \\
  Noah's Ark Lab \\
  \And
  Irina Piontkovskaya \\
  Noah's Ark Lab \\
}
\begin{document}

\maketitle

\begin{abstract}
    In multi-domain learning, a single model is trained on diverse data domains to leverage shared knowledge and improve generalization. The order in which the data from these domains is used for training can significantly affect the model's performance on each domain. However, this dependence is under-studied. In this paper, we investigate the influence of training order (or data mixing) in multi-domain learning using the concept of Lie bracket of gradient vector fields. By analyzing the infinitesimal effects of changing the training order, we identify regions in the parameter space where altering the order between two training domains can benefit the target loss. We validate the predictions of our theoretical framework on the influence of training order (or data mixing) both on a toy example and  bilingual LLM pre-training.
\end{abstract}

\section{Introduction}

In real-world scenarios, training data may come from different sources that vary in quality, topics, diversity, and other aspects. For example, the modern large language models are trained on data collected from the curated list of domains that are comprised of web-crawled data, math, code, academic papers, etc. The natural question arises: how to mix the data and when to use each of the domains (early in the training or in the last stage)? There is no rigorous approach to this problem and practitioners generally use handcrafted solutions.

Another prominent example is the multilingual setting, where the data comes from different languages that can also substantially differ in the amounts of available data. It has been demonstrated in~\cite{choi2024order} that the sequence of domain exposure matters in the presence of dataset imbalances. Specifically, training first with the prevalence of high-resource domains followed by an equal mix of high- and low-resource domains yields better results. Motivated by these findings, we address the problem of the domain order during the training for a fixed training budget.



Another relevant directions to our setting are multi-task training, domain adaption and curriculum learning. For multi-task problems, we should balance the signals from different losses/problems to get an optimal training trajectory. There is a big amount of literature on multi-task learning, we mention just a few~\cite{pmlr-v162-navon22a, lee2022sequential, NEURIPS2020_3fe78a8a, wang2021gradient}. Many of them manipulate with the gradients to stir the training into non-conflicting direction. Different from this papers, we do not interfere gradients of domains as we only concerned with the their order. Domain adaptation concerned with the gradual distribution shift from the source to the target domain (e.g. see~\cite{Kumar2020UnderstandingSF, he2023gradual, zhuang2024gradual}). The order of the intermediate or mixtures of the source and target domains have a great impact on the adaptation process. In the present paper, we do not interpolate between domains, but, only consider interaction between them to improve training trajectory. 

More formally, we consider a scenario where the total amount of data for each domain is fixed, but the order in which the examples from different domains are interleaved can be adjusted. This setup is practical as, for example, we have vast amounts of web-crawled data and limited amounts of high-quality data that we want to fully embrace. If we use the same proportion of high-quality data in each batch it may lead to sub-optimal training and a common practice is to increase the proportion of that data later in the training.

Our key contributions are as follows: 
\begin{itemize} 
    \item We introduce a theoretical framework to predict how changes in training order affect model performance. 
    \item Our method provides guidance on how to adjust domain-specific training weights at each step of a training trajectory to achieve better model performance.
    \item We validate the predictions of our theoretical framework both on a toy example and bilingual LLM pre-training.
\end{itemize}

\section{Multi-Domain Learning}

In multi-domain learning, where models are trained on data from multiple sources, the choice of domain weights plays a crucial role in balancing the contribution of each domain during training. The weights determine the importance of each domain and influence the model's generalization capabilities.

Particularly, given $K$ supervised datasets with inputs $(X_k)_k$ and outputs $(Y_k)_k$, the target is to find model parameters $\theta^*$ which minimize datasets' losses $\left(L_1(X_1, Y_1, \theta), \dots, L_K(X_K, Y_K, \theta)\right)$.
The common setups here include multi-objective optimization, in which one deals with the entire Pareto front simultaneously, or loss averaging scenario.
In the last one, the target is the weighted sum of the datasets' losses.
The training process in this case is based on the reweighting of domain gradients:
\begin{equation}\label{eq:MDL_update}
    \theta_{i+1}=\theta_i - \eta\sum_k w_k^i  \mathbb{E}_{(x_k,y_k)} \nabla_{\theta_i} L_k(x_k, y_k, \theta_i)
\end{equation}
where $(x_k,y_k)$ is sampled from $k$-th dataset $(X_k, Y_k)$ and $w_k^i$ are the domain weights at time step $i$.
We can equivalently re-phrase \eqref{eq:MDL_update} as sampling data from the domains into a batch with probabilities $w^i_k$ (we set a constraint $\sum_k w^i_k = 1$) at step $i$. In practice, the usage of this formulation leads to a better performance. Thus, we will follow it in our experiments.


\section{Vector Fields and Their Commutators}\label{sec:vec_com}

We assume that there are $K$ domains, which define a tuple of loss functions $(L_k)_k$ on the space of parameters $\Theta$.
Note that we consider the loss functions as something aggregated over the whole dataset, rather than something depending on a training sample $L'=L'(\theta, x, y)$.
This space of parameters is considered to be an open subset of $\mathbb{R}^n$.
The functions $L_k$ are assumed to be $C^2$-smooth, and therefore have gradients $(\nabla L_k)_k$, forming \textit{vector fields} on $\Theta$.

\begin{rem}
    From the differential geometry viewpoint, the gradient is well-defined only in case of prescribed Riemann metric on $\Theta$, which in this paper we assume to be identity in standard Euclidean coordinates on $\mathbb{R}^n$, which means that $\nabla f = (\frac{\partial}{\partial x_i} f)_i$. It worth mention that though it is a standard assumption, this choice of Riemann metric is not canonical. 
    Also, it does not resemble all modern training tricks. For example, variable learning rate should be considered as time-dependent scalar Riemann metric; coordinate-wise gradient scaling in modern optimizers, like Adam~\cite{kingma2014adam} or AdamW~\cite{loshchilov2017decoupled}, -- as time-dependent Riemann metric with diagonal matrix. Thorough theoretical analysis of these cases is beyond the scope of this paper.
\end{rem}

Training procedure for multi-domain learning requires at each step the choice of weights for each domain, which is generally implemented as composing a batch from samples from different tasks.
Given the domain weights as function $w(t) = (w_k(t))_k$ (we call it \textit{weight schedule}) which we require to be non-negative, right-continuous and satisfying $\sum_k w_k(t)=1$, at time $t$ our loss would be the weighted sum of domain losses $L_k$ with weights $w_k(t)$.
Therefore, the parameters $\theta(t)$ satisfy an ODE 
\begin{equation}\label{grad_ode}
    \dot\theta(t) = -\sum_k w_k(t)\nabla L_k(\theta(t)).
\end{equation}

Recall also the definition of \textit{flow} of a vector field $v$, that is a map 
$\Phi:\mathbb{R}_+ \times\Theta\to\Theta$ satisfying
\begin{equation}\label{}
    \dot\Phi(t, \theta) =v(\Phi(t, \theta))
\end{equation}
and say that flows $\Phi_1, \Phi_2$ \textit{commute} if for any $t_1, t_2, \theta$ holds
\[
    (\Phi_1(t_1) \circ \Phi_2(t_2))(\theta) = (\Phi_2(t_2) \circ \Phi_1(t_1))(\theta).
\]

If all flows for the vector fields $(-\nabla L_k)_k$ commute (that is, each pair of them commutes), then the result of training using ODE~(\ref{grad_ode}) would only depend \textit{on the amount of used training data} $\left(\int w_k(t)\mathrm{d}t\right)_k$ in each domain, rather than on the entire weight schedule $(w_k)_k$.
In other words, the training order would not matter at all, for example, one would train on whole domains one by one in any order.

However, this doesn't hold in practice as evidenced by the catastrophic forgetting. If we continue training a model on a different task, its performance on the initial task generally degrades. Hence, we should expect that the vector fields $\nabla L_k$ do not commute, and the training results depend on the order or sampling proportion schedule. 
In Figure~\ref{fig:vector_fields}, we present such a situation. 



\begin{figure}
     \centering
     \begin{subfigure}[b]{0.3\textwidth}
         \centering
         \includegraphics[width=\textwidth]{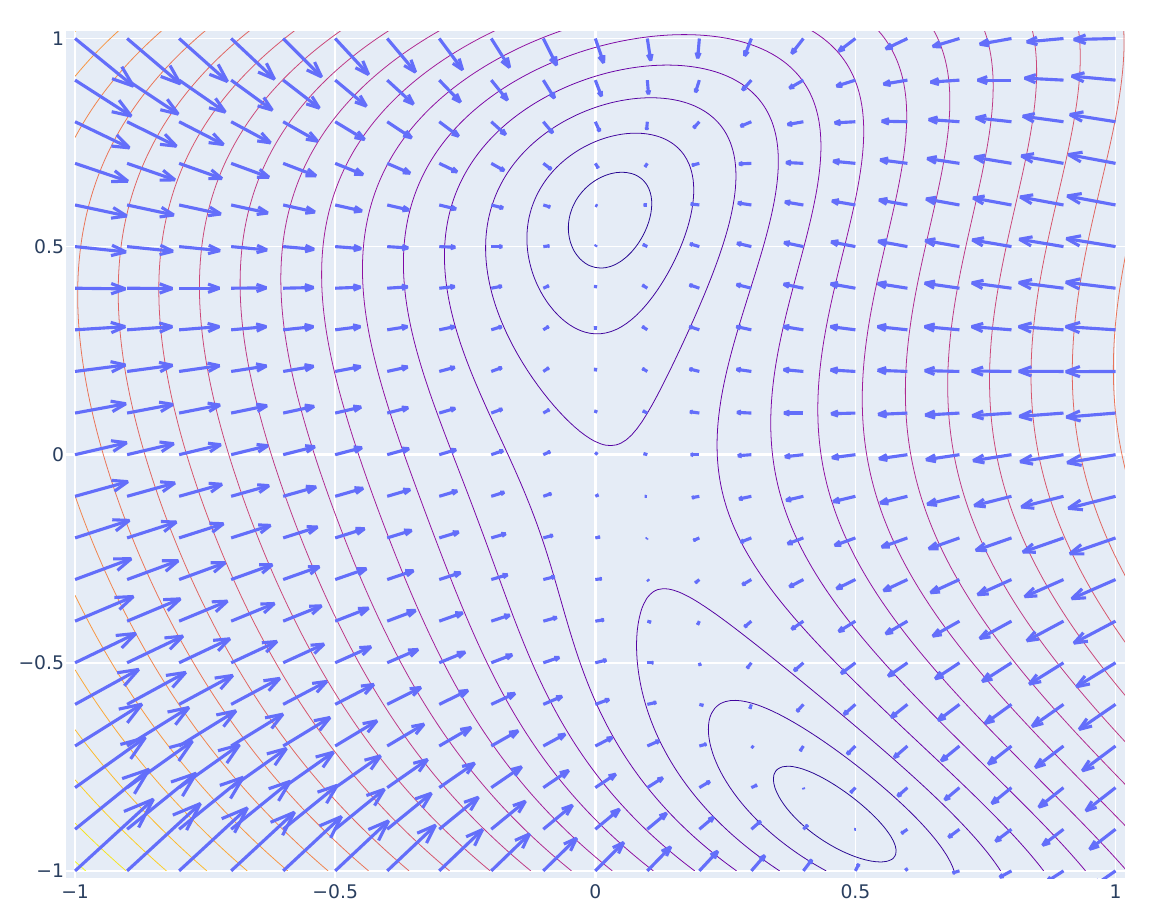}
         \caption{Gradient vector field $\nabla L_1$}
         \label{fig:y equals x}
     \end{subfigure}
     \begin{subfigure}[b]{0.3\textwidth}
         \centering
         \includegraphics[width=\textwidth]{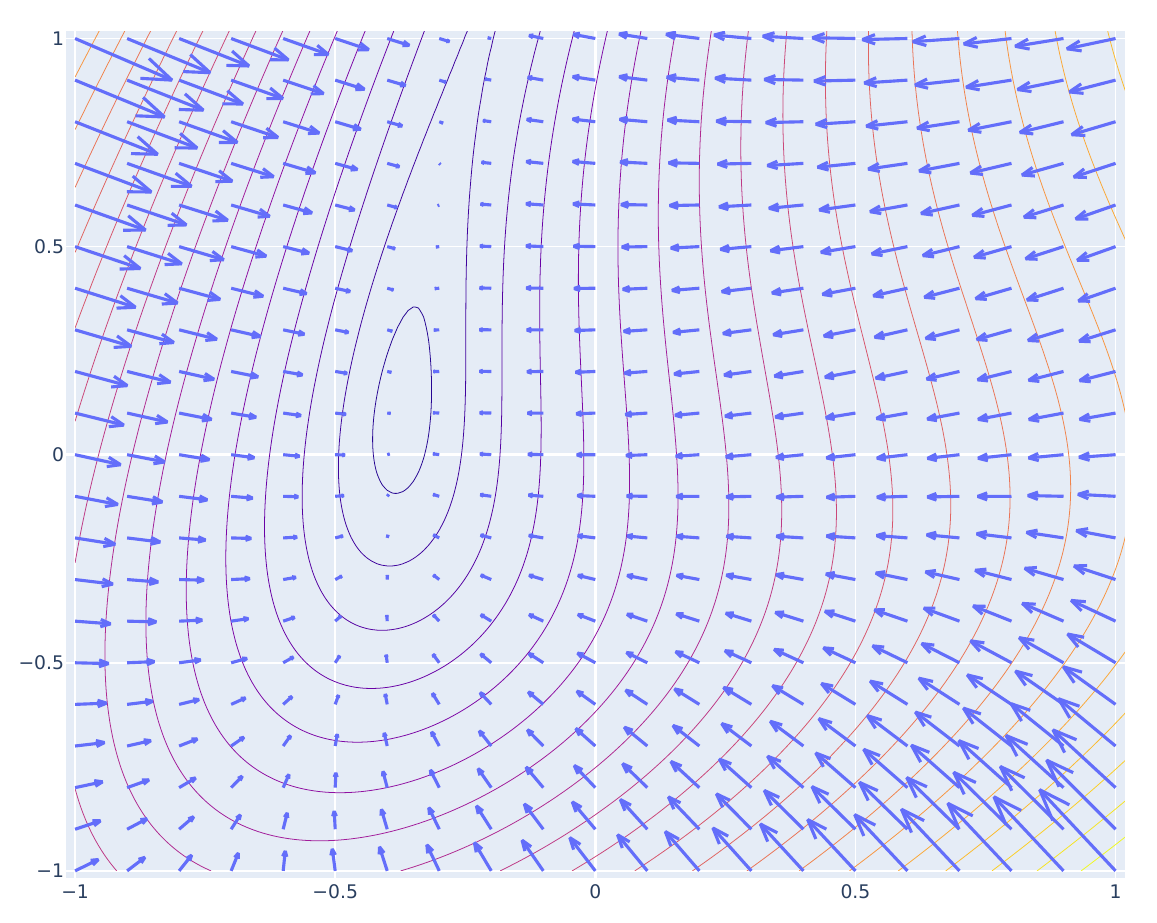}
         \caption{Gradient vector field $\nabla L_2$}
         \label{fig:three sin x}
     \end{subfigure}
     \begin{subfigure}[b]{0.3\textwidth}
         \centering
         \includegraphics[width=\textwidth]{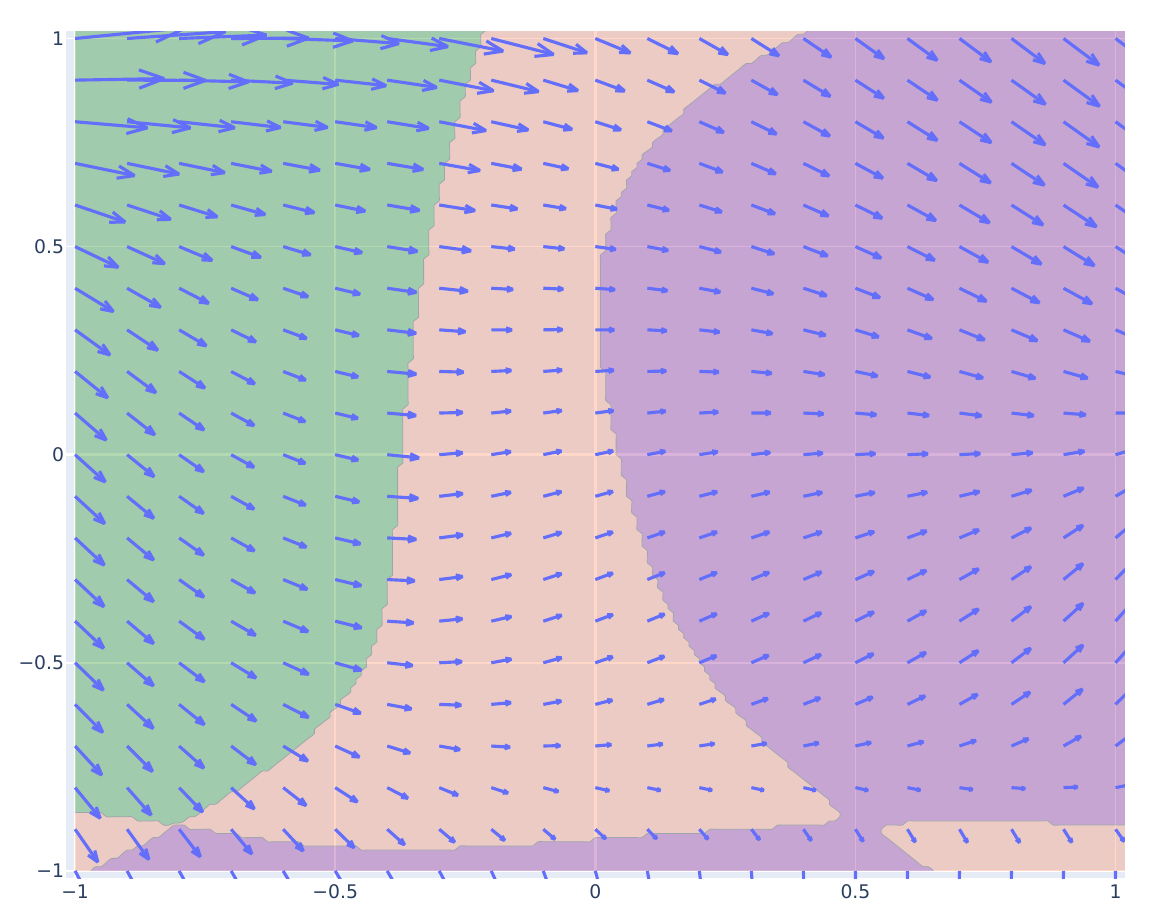}
         \caption{Lie bracket $[\nabla L_1, \nabla L_2]$}
         \label{fig:five over x}
     \end{subfigure}
        \caption{Example of vector fields Lie bracket. (a), (b): the level curves and gradients for two functions $L_1, L_2$. Vectors in (c) correspond to Lie bracket of gradient vector fields. In the green area, the Lie bracket has positive dot products with both $\nabla L_1$ and $\nabla L_2$, and in purple area similar dot products are negative. It means (see Corollary~\ref{corollary}) that in these areas some reordering of the default training would benefit both losses.}
        \label{fig:vector_fields}
\end{figure}

\subsection{Toy example}
Suppose we have two domains, training loss functions of which are quadratic: 
\[
    L_{1,2}(\theta) = \frac{1}{2}(\theta - b_{1, 2})^T A_{1, 2} (\theta - b_{1, 2})
\] 
where $A_{1, 2}$ are positive-definite matrices.

The learning on these domains defines vector fields $v_\alpha(\theta)=A_\alpha(\theta - b_\alpha),\ \alpha=1,2$ and therefore flows 
$\Phi_\alpha(t, \theta)=b_\alpha + e^{t A_\alpha}(\theta-b_\alpha)$.
The composition of flow $\Phi_1$ with time $t_1$ and flow $\Phi_2$ with time $t_2$ in different orders yield different results:
\[
\begin{aligned}\label{qwe}
    (\Phi_1(t_1) \circ \Phi_2(t_2))(\theta) &= 
        b_1 + e^{t_1 A_1}(b_2 + e^{t_2 A_2}(\theta-b_2) - b_1) 
        =\\&\quad=
            e^{t_1 A_1} e^{t_2 A_2} \theta + (1 - e^{t_1 A_1}) b_1 + e^{t_1 A_1} (1 - e^{t_2 A_2}) b_2 \\
    (\Phi_2(t_2) \circ \Phi_1(t_1))(\theta) &= 
        b_2 + e^{t_2 A_2}(b_1 + e^{t_1 A_1}(\theta-b_1) - b_2) 
        =\\&\quad=
            e^{t_2 A_2} e^{t_1 A_1} \theta + (1 - e^{t_2 A_2}) b_2 + e^{t_2 A_2} (1 - e^{t_1 A_1}) b_1
\end{aligned}
\]
Note that when matrices $A_1$ and $A_2$ do not commute, $e^{t_2 A_2} e^{t_1 A_1}$ does not equal $e^{t_1 A_1} e^{t_2 A_2}$, which yields giving different dependency of the above formulae on $\theta$.
Moreover, even if $A_1$ and $A_2$ commute, the result of flow composition differ by a constant vector $(1 - e^{t_1 A_1}) (1 - e^{t_2 A_2}) (b_1 - b_2)$.

The infinitesimal difference between $\Phi_1(t_1) \circ \Phi_2(t_2)$ and $\Phi_2(t_2) \circ \Phi_1(t_1)$ is 
\begin{equation}
\begin{split}
    [v_1, v_2](\theta) = \frac{\partial}{\partial t_1}\frac{\partial}{\partial t_2}
    (\Phi_1(t_1) \circ \Phi_2(t_2) - \Phi_2(t_2) \circ \Phi_1(t_1))(\theta) =\\= 
        [A_1,A_2]\theta - A_1A_2b_2 + A_2A_1b_1
\end{split}
\end{equation}
Here $[v_1, v_2]$ stands for \textit{Lie bracket (commutator) of vector fields} and $[A_1,A_2]$ is matrix commutator $A_1A_2-A_2A_1$.

\subsection{General case}
\begin{thm}\label{thm_comm}
    Commutator of the gradient flows for functions $L_1$ and $L_2$ up to second order equals
    \begin{align}
        [\Phi_1(t_1), \Phi_2(t_2)] = 
            t_1t_2 \, R(L_1, L_2) + o(t_1t_2) \ \text{where} \\
        \mathrm{R}(L_1, L_2) = [\nabla L_1, \nabla L_2] = 
            \Hess L_2\, \nabla L_1 - \Hess L_1\, \nabla L_2 \label{eq:r_formula}
    \end{align}
    where $\Hess$ means Hessian of a function, i.e.~matrix of its second derivatives.
\end{thm}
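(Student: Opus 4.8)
The plan is to prove the statement by expanding both gradient flows to second order in time and composing them in the two possible orders. Write $v_k$ for the vector field generating $\Phi_k$ (either $\nabla L_k$ or $-\nabla L_k$, according to one's sign convention for the gradient flow; the overall sign is immaterial because the Lie bracket is bilinear). From $\dot\Phi_k(t,\theta)=v_k(\Phi_k(t,\theta))$ and $\Phi_k(0,\theta)=\theta$, differentiating once more in $t$ gives $\ddot\Phi_k=Dv_k(\Phi_k)\,v_k(\Phi_k)$ with $Dv_k$ the Jacobian of $v_k$; since $L_k\in C^2$ this is continuous in $t$, so Taylor's theorem yields
\[
    \Phi_k(t,\theta)=\theta+t\,v_k(\theta)+\tfrac{t^2}{2}\,Dv_k(\theta)\,v_k(\theta)+o(t^2)
\]
locally uniformly in $\theta$.

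Next I would substitute $\psi:=\Phi_2(t_2,\theta)=\theta+t_2 v_2(\theta)+o(t_2)$ into the expansion of $\Phi_1(t_1,\cdot)$ and expand $v_1$ to first order, $v_1(\psi)=v_1(\theta)+t_2\,Dv_1(\theta)v_2(\theta)+o(t_2)$. Collecting all terms of total order at most two in $(t_1,t_2)$,
\[
    (\Phi_1(t_1)\circ\Phi_2(t_2))(\theta)=\theta+t_1 v_1+t_2 v_2+t_1 t_2\,Dv_1 v_2+Q+o(t_1^2+t_2^2),
\]
where all fields are evaluated at $\theta$ and $Q=\tfrac{t_1^2}{2}Dv_1 v_1+\tfrac{t_2^2}{2}Dv_2 v_2$ is invariant under the interchange $1\leftrightarrow 2$. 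Swapping the two indices gives the same expression for $(\Phi_2(t_2)\circ\Phi_1(t_1))(\theta)$ except that the mixed term becomes $t_1 t_2\,Dv_2 v_1$; on subtraction the linear terms and $Q$ cancel, leaving
\[
    [\Phi_1(t_1),\Phi_2(t_2)](\theta)=t_1 t_2\,\bigl(Dv_1 v_2-Dv_2 v_1\bigr)(\theta)+o(t_1 t_2).
\]

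It remains to recognise $Dv_1 v_2-Dv_2 v_1$ as the Lie bracket in Hessian form. Since $v_k$ is, up to sign, a gradient field, its Jacobian is, up to the same sign, the Hessian $\Hess L_k$, which is symmetric; the coordinate identity $([\nabla L_1,\nabla L_2])_i=\sum_j\bigl((\nabla L_1)_j\,\partial_j(\nabla L_2)_i-(\nabla L_2)_j\,\partial_j(\nabla L_1)_i\bigr)$ combined with $\partial_j(\nabla L_k)_i=(\Hess L_k)_{ij}$ then gives $[\nabla L_1,\nabla L_2]=\Hess L_2\,\nabla L_1-\Hess L_1\,\nabla L_2=R(L_1,L_2)$, so the surviving coefficient is $R(L_1,L_2)$ up to the orientation chosen for the difference, which is \eqref{eq:r_formula}. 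As a consistency check, the quadratic losses of the toy example reproduce $[A_1,A_2]\theta-A_1A_2 b_2+A_2A_1 b_1$.

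The algebra here is routine; the point that genuinely needs care is the bookkeeping, on two fronts. First, the $o(t_1t_2)$ remainder must be controlled uniformly in both time variables — cleanest by observing that $(t_1,t_2,\theta)\mapsto(\Phi_1(t_1)\circ\Phi_2(t_2))(\theta)$ is jointly smooth (standard smooth dependence of ODE solutions on time and initial data) and extracting the $t_1t_2$-coefficient as a second mixed partial derivative, which also makes the phrase ``up to second order'' precise. Second, one must fix once and for all which composition order the symbol $[\Phi_1(t_1),\Phi_2(t_2)]$ abbreviates, so that the surviving mixed term is labelled $+[\nabla L_1,\nabla L_2]$ rather than its negative. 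A shortcut avoiding the hands-on expansion is to cite the classical identity that the second mixed time-derivative of the flow commutator of two vector fields is their Lie bracket, and then compute $[\nabla L_1,\nabla L_2]$ directly in Euclidean coordinates via $\partial_j(\nabla L_k)_i=(\Hess L_k)_{ij}$.
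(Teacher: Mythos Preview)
Your proposal is correct and follows essentially the same route as the paper's own proof: second-order Taylor expansion of each flow, composition in both orders, cancellation of the symmetric pieces, and identification of the surviving mixed term with $\Hess L_2\,\nabla L_1-\Hess L_1\,\nabla L_2$. You are in fact somewhat more careful than the paper about remainder control (joint smoothness in $(t_1,t_2,\theta)$ to justify the $o(t_1t_2)$) and about the sign/orientation convention for the commutator symbol, but the underlying argument is identical.
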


From this, we may answer, when the domain weight schedule $w=(w_k(t))_k$ is locally optimal in class of schedules with fixed total amount of domain data $(\int w_k(t)\mathrm{d}t)_k$.
Before doing this, we should define, what kind of optimality do we pursue.
Suppose, the overall target is to produce a better model for some loss function $L(\theta)$.
As in gradient-based optimization, we then use this function also as a criterion for the training order optimization.
Therefore, we call $w=(w_k(t))_k$ \textit{locally optimal} with respect to $L$ if any infinitesimal change in $w$ (in class of schedules with same total domain sizes) would degrade $L$.


\begin{cor}\label{corollary}
(a)
    Let $L, L_i:\Theta\to \mathbb{R}$ be smooth functions and let $w=(w_k(t))_k$ and $w^\varepsilon=(w^\varepsilon_k(t))_k$ be weight schedules, such that 
    \[
        w^\varepsilon_k(t)= 
        \begin{cases}
            w_i(t) + \delta,   & \text{if } t\in [t_0, t_0+\varepsilon)\text{ and }k=i,\\
            w_j(t) - \delta,   & \text{if } t\in [t_0, t_0+\varepsilon)\text{ and }k=j,\\
            w_i(t) - \delta,   & \text{if } t\in [t_0+\varepsilon, t_0+2\varepsilon)\text{ and }k=i,\\
            w_j(t) + \delta,   & \text{if } t\in [t_0+\varepsilon, t_0+2\varepsilon)\text{ and }k=j,\\
            w_k(t),              & \text{otherwise},
        \end{cases}
    \]
    then the trajectories $\theta, \theta^\varepsilon$ defined by weight schedules $w,\,w^\varepsilon$, satisfy 
    \begin{equation}\label{excess_loss}
        L(\theta^\varepsilon(t_0+2\varepsilon))-L(\theta(t_0+2\varepsilon)) = 
        \frac{1}{2}\delta\,\varepsilon^2\,\mathrm{P}(L_i-L_j,\Sigma_k w_k(t_0)L_k;L)+o(\delta\,\varepsilon^2)
    \end{equation}
    where 
    \begin{equation}\label{excess_loss_p}
    \begin{split}
        P(X, Y; Z) (\theta) = \langle R(X, Y)(\theta), \nabla Z(\theta) \rangle 
        = \\ = (\nabla Z)^T\, (\Hess X\, \nabla Y - \Hess Y\, \nabla X).
    \end{split}
    \end{equation}

    (b) For any locally optimal training schedule $w=(w_k(t))_k$ with respect to loss function $L$, the corresponding training trajectory $\theta(t)$ satisfies the following condition: for any step $t$ for any $i, j$ if 
    \[\mathrm{P}(L_i-L_j,\Sigma_k w_k(t)L_k;L)(\theta(t))\neq 0,\]
    then either $w_i(t)=0$ or $w_j(t)=0$.
\end{cor}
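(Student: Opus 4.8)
\textbf{Proof plan for Corollary~\ref{corollary}.}

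\emph{Part (a).} Since $L$ is evaluated exactly at $t_0+2\varepsilon$, the claim reduces to a second-order expansion of $\theta^\varepsilon(t_0+2\varepsilon)-\theta(t_0+2\varepsilon)$; the two trajectories agree before $t_0$, so I may take $t_0=0$ and set $x_0=\theta(0)$. Writing $F(t,x)=-\sum_k w_k(t)\nabla L_k(x)$ for the base field and $g(x)=\nabla L_j(x)-\nabla L_i(x)=-\nabla(L_i-L_j)(x)$ for the swap direction, the perturbed trajectory solves $\dot\theta^\varepsilon=F(t,\theta^\varepsilon)+\sigma(t)\,\delta\,g(\theta^\varepsilon)$ with $\sigma\equiv+1$ on $[0,\varepsilon)$ and $\sigma\equiv-1$ on $[\varepsilon,2\varepsilon)$, $\theta^\varepsilon(0)=x_0$. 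The plan is to linearize in $\delta$: setting $\theta^\varepsilon=\theta+\delta\xi+O(\delta^2)$ gives the inhomogeneous linear ODE $\dot\xi=A(t)\xi+\sigma(t)h(t)$, $\xi(0)=0$, with $A(t)=D_xF(t,\theta(t))$ and $h(t)=g(\theta(t))$. Solving by Duhamel's formula and expanding the state-transition matrix and $h$ to first order around $t=0$, the two sub-intervals contribute equal and opposite $O(\varepsilon)$ terms — this cancellation is exactly what encodes that the construction preserves the total domain sizes — and the surviving term is $\xi(2\varepsilon)=c\,\varepsilon^2\bigl(A(0)h(0)-\dot h(0)\bigr)+o(\varepsilon^2)$, where $c$ is an explicit numerical constant coming from the elementary integrals $\int_0^\varepsilon(\varepsilon-s)\,ds$, $\int_\varepsilon^{2\varepsilon}(2\varepsilon-s)\,ds$, and so on. Now $A(0)h(0)-\dot h(0)=D_xF(0,x_0)\,g(x_0)-Dg(x_0)\,F(0,x_0)$ is the Lie bracket $[g,F(0,\cdot)](x_0)$; since the weights $w_k(0)$ are constants, $F(0,\cdot)=-\nabla(\sum_k w_k(0)L_k)$ is itself a gradient field, so by~\eqref{eq:r_formula} this equals $R\bigl(L_i-L_j,\ \sum_k w_k(0)L_k\bigr)(x_0)$ (the two overall minus signs cancel). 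Finally $L(\theta^\varepsilon(2\varepsilon))-L(\theta(2\varepsilon))=\langle\nabla L(\theta(2\varepsilon)),\,\theta^\varepsilon(2\varepsilon)-\theta(2\varepsilon)\rangle+o(\delta\varepsilon^2)$, and replacing $\nabla L(\theta(2\varepsilon))$ by $\nabla L(x_0)$ (the discrepancy is $O(\varepsilon)$ times an already $O(\delta\varepsilon^2)$ vector) turns this into $c\,\delta\varepsilon^2\,\langle\nabla L(x_0),R(L_i-L_j,\sum_k w_k(0)L_k)(x_0)\rangle+o(\delta\varepsilon^2)$, which is $c\,\delta\varepsilon^2\,P(L_i-L_j,\sum_k w_k(0)L_k;L)(x_0)+o(\delta\varepsilon^2)$ by~\eqref{excess_loss_p}. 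An alternative, more intrinsic route is to write the perturbed trajectory over $[t_0,t_0+2\varepsilon)$ as a composition of short flows $\Phi^{u_0}_\varepsilon\circ\Phi^{-\delta v}_\varepsilon\circ\Phi^{u_0}_\varepsilon\circ\Phi^{\delta v}_\varepsilon$ up to $O(\varepsilon^3)$ (with $u_0=F(t_0,\cdot)$, $v=g$) and to extract the resulting group commutator using Theorem~\ref{thm_comm} directly; both give the same leading term.

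\emph{Part (b).} This follows from (a) by a sign argument. Fix a time $t$ and indices $i,j$ and suppose $w_i(t)>0$ and $w_j(t)>0$. By right-continuity of $w$ there is $\varepsilon_0>0$ with $\inf_{[t,t+2\varepsilon_0)}\min(w_i,w_j)>0$, so for $|\delta|$ small enough the perturbed schedule $w^\varepsilon$ is again non-negative, right-continuous and satisfies $\sum_k w^\varepsilon_k\equiv1$, and it has exactly the same total domain sizes as $w$ (on domain $i$ it adds then subtracts $\delta$ over equal-length windows, likewise on $j$, and it leaves every other domain untouched). Hence $w^\varepsilon$ is an admissible infinitesimal perturbation of $w$. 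If $P(L_i-L_j,\sum_k w_k(t)L_k;L)(\theta(t))\neq0$, choose the sign of $\delta$ so that $c\,\delta\,P<0$; then by~\eqref{excess_loss} the perturbation strictly decreases $L$ (evaluated at the end of the perturbed window) for all sufficiently small $\varepsilon$, contradicting local optimality of $w$. Therefore $P(L_i-L_j,\sum_k w_k(t)L_k;L)(\theta(t))\neq0$ forces $w_i(t)=0$ or $w_j(t)=0$.

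\emph{Where the work is.} The only genuinely technical step is the $\varepsilon$-bookkeeping in Part (a): one must verify that all $O(\varepsilon)$ terms and all $\delta$-free $O(\varepsilon^2)$ terms cancel between the two sub-intervals, correctly collect the surviving $O(\delta\varepsilon^2)$ contribution, and pin down the numerical constant $c$; and one must make the remainder genuinely $o(\delta\varepsilon^2)$ uniformly, via Grönwall bounds on the $\delta^2$ correction (which turns out to be $O(\delta^2\varepsilon^2)$) and on the $O(\varepsilon^3)$ truncation. A secondary, conceptual point is the reading of ``locally optimal with respect to $L$'' in Part (b): the argument above controls $L$ at the end of the perturbed window, which is the natural greedy/pointwise criterion; if one instead wants the effect on the loss at the end of the full training run, $\nabla L(\theta(t))$ should be replaced by the adjoint (costate) obtained by back-propagating $\nabla L$ through the remaining trajectory, which changes the statement but not the structure of the proof.
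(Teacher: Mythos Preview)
Your proposal is correct, and Part~(b) is essentially identical to the paper's proof (contradiction via an admissible perturbation; your free choice of the sign of~$\delta$ is a slightly cleaner substitute for the paper's WLOG on the sign of~$P$).

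For Part~(a) your technical route differs from the paper's. You linearise in~$\delta$ first (variational equation $\dot\xi=A(t)\xi+\sigma(t)h(t)$, solved by Duhamel) and then expand in~$\varepsilon$. The paper instead writes the second-order iterated-integral (Chen) expansion
\[
\theta(t_0+\tau)=\theta_0-\sum_k\Bigl(\int_0^\tau w_k\Bigr)\nabla L_k+\sum_{k_1,k_2}\Bigl(\int_0^\tau\!\!\int_0^{\tau_1} w_{k_1}w_{k_2}\Bigr)\Hess L_{k_1}\nabla L_{k_2}+o(\tau^2)
\]
for both schedules and subtracts: the single-integral coefficients agree exactly (total domain sizes are preserved), and the difference of the double-integral coefficients is computed directly. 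These are just two orderings of the same two-parameter expansion. The paper's route is shorter on the page since only the $\iint w_{k_1}w_{k_2}$ terms survive, while yours makes the Lie-bracket structure $[g,F]$ explicit from the outset and handles the $O(\delta^2)$ remainder more transparently via Gr\"onwall. Both leave exactly the elementary $\varepsilon$-integrals you list as the only real work.
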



We note that for the setting with two domains, due to bilinearity and anti-symmetry of the Lie bracket, our criterion says that when $\mathrm{P}(L_1,L_2;L)\neq 0$, it is better to train either \textit{mode one:} entirely on the first domain, or \textit{mode two:} entirely on the second; but not to mix them.
Moreover, from the (a) part, it also follows that the shift from mode one to training on the second one may occur at time $t$ only when $\mathrm{P}(L_1,L_2;L)(\theta(t)) \le 0$, and conversely, shift from mode two to mode one requires $\mathrm{P}(L_1,L_2;L)(\theta(t)) \ge 0$.

For a given trajectory, if it does not satisfy the optimality criterion from part (b) for some time step $t$, the local change in the order would give a better trajectory.

As it is common, the optimality criterion does not explicitly convert into the algorithm constructing the optimal trajectory.
However, it can be applied for analysis an existing training trajectory.
Given a series of checkpoints during training, one can check, when the training weight schedule was far from optimal, and make recommendation on the weigh schedule correction.
We analyze the predictive power of such recommendations in the following section.
Also, potentially, the information from $\mathrm{P}(L_i,L_j;L)$ may be used for online data mixing.

\section{Experiments}

\subsection{Quadratic optimization}

Let us further analyze the toy example from the previous section.
We independently sample two $n\times n$ random matrices $A_{1,2}$ with predefined power law~\cite{xie2022power} spectrum, and random vectors $b_{1,2}$.
These datum generates loss functions $L_1$ and $L_2$.
The weight schedule $w(t)$ leads to trajectories of the ODE~(\ref{grad_ode}).
Note that despite the simplicity of this setting, even constant weight schedule may produce non-trivial behavior.
For example, the trajectory with constant weights $w_1(t)=w_2(t)=\frac{1}{2}$ may have non-monotonous~(see fig.~\ref{fig:trajectories_losses} in the Appendix~\ref{appendix:toy_details}) behavior of $L_1(\theta(t))$ and $L_2(\theta(t))$ (of course, $w_1L_1+w_2L_2$ is monotonously decreasing).
We want to validate the predictions from Theorem~\ref{thm_comm}
on the influence of altering the weight schedule.

We start from constructing the trajectory $\theta(t)$ of gradient descent optimization with constant weight schedule $(0.5,0.5)$, corresponding to constant loss function 
\[
    L_{basic} = \frac{L_1+L_2}{2} = \frac{1}{4}\sum_{i=1,2}
        (\theta - b_{i})^T A_{i} (\theta - b_{i})
\]
Suppose that we are given parameters $\theta = \theta(t)$ at time $t$ trained with the above weight schedule. Then, we start training weights $(1,0)$ and $(0,1)$ (which means training completely on the first and completely on the second domain respectively) for $\Delta t$ steps, obtaining points $\theta_1$ and $\theta_2$ respectively.
From these two points, we start training with weights $(0,1)$ and $(1,0)$ respectively, obtaining points $\theta_{12}$ and $\theta_{21}$.
The total amount of training data used to produce checkpoints $\theta_{12}$, $\theta_{21}$ and $\theta_{base} = \theta(t+2\Delta t)$ is absolutely the same, but their evaluation would give different results.
Theory predicts that 
\[
\begin{aligned}
    &L(\theta_{12}) - L(\theta_{base}) \simeq  L(\theta_{base}) - L(\theta_{21})  \simeq  \frac{1}{8}(\Delta t)^2\, \mathrm{P}(L_1, L_2; L)(\theta(t))
    \simeq\\&\simeq
        \frac{1}{8}(\Delta t)^2 
        ((\theta-b_1)^TA_1 + (\theta-b_2)^TA_2)
        \left(
            A_1A_2(\theta-b_2) - A_2A_1(\theta-b_1)
        \right)
\end{aligned}
\]

\begin{table}

\caption{The predicted vs. observed excess loss, $EL^{12}/P_{L_1, L_2; L}$, for quadratic optimization example. }
\centering
\begin{tabular}{cccc}

\toprule
\diagbox{t}{$\Delta t$}  & 0.001 & 0.01 & 0.1 \\
\midrule
0.1 & $   0.997 \pm    0.001$ &  $   0.972 \pm    0.010$ & $   0.763 \pm    0.053$ \\
\midrule
0.3 & $   0.997 \pm    0.001$ & $   0.973 \pm    0.008$ & $   0.766 \pm    0.084$ \\
\midrule
1.0 & $   0.997 \pm    0.001$ & $   0.974 \pm    0.012$ & $   0.774 \pm    0.135$ \\
\bottomrule
\end{tabular}

\label{tab:toy_results}
\end{table}

We take results from training with different starting steps $t$ and intervention lengths $\Delta t$ and compute the values of $L_1$, $L_2$ for $\theta_{12}$, $\theta_{21}$ and $\theta_{base}$.
For $i=1,\, 2$, we define the excess loss $EL_i^{12} = L_i(\theta_{12}) - L_i(\theta_{base})$, and similarly $EL_i^{21} = L_i(\theta_{21}) - L_i(\theta_{base})$. In Table~\ref{tab:toy_results}, we compare the predicted and actual values of the excess losses. 


\subsection{Bilingual LLM pre-training}\label{sec:llm_balance}

We try to answer, how does the order of training on different domains influences the LLM's performance.
To do it, we take a small GPT-2 architecture and pretrain it in two languages.
As the pre-training data, we take C4~\cite{raffel2020exploring} dataset for English, and mC4~\cite{xue2020mt5} dataset for Russian. We follow the sampling-based paradigm, i.e. we take weights $(w_1, w_2)$ satisfying $w_1 + w_2 = 1$ and use them as the sampling probabilities for English and Russian datasets to form a mini-batch. 

The LLM pre-training is commonly performed using the Adam or AdamW optimizer (see e.g.~\cite{touvron2023llama, bai2023qwen}) due to its better stability.
Despite that our theoretical results are stated and proven for gradient descent (see remark in section~\ref{sec:vec_com}), we follow the standard pre-training pipeline with Adam.
The results in this section show that our results can be applied for training with Adam as well.

\begin{figure}[!t]
    \centering
    \includegraphics[width=0.7\linewidth]{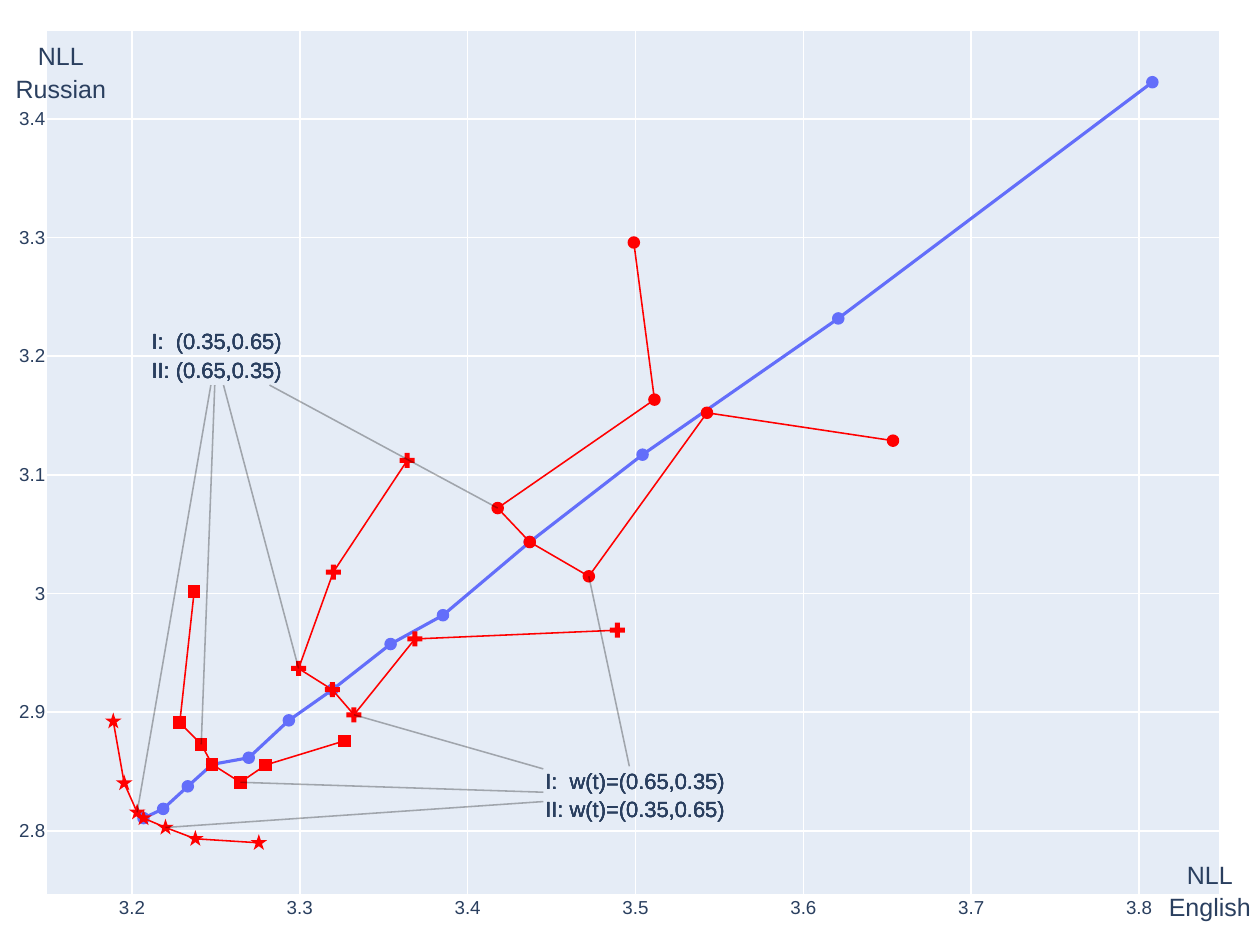}
    \caption{The results of intervening into domain weight schedule for bilingual LLM training. The blue points constitute loss values (negative log likelihoods for English and Russian data) for a training trajectory with constant domain weight schedule $w(t)=(0.5,0.5)$, from a checkpoint at step 4000 (right, top) to step 28000 (bottom, left). The red points correspond to weight schedules from formula~(\ref{w12_schedule}). The total amount of English and Russian tokens used for training is constant across each red curve: equal of 5000, 8000, 11000 and 14000 batches for each language for 
    {\small \color{red} \ding{108}},
    {\small \color{red} \ding{ 58}},
    {\small \color{red} \ding{110}} and 
    {\small \color{red} \ding{ 72}}, respectively; only the degree of intervention $\Delta w$ changes from $0.15$ for annotated points to $0.45$ for the most distant points.}
    \label{fig:llm_order_matters}
\end{figure}

Using the same setting as in the previous section, we train a basic model with the weights $(0.5,0.5)$ for $t_0$ steps and get parameters $\theta_{base}=\theta(t_0)$.
Next, starting with the checkpoint $\theta_{base}$, we take $\Delta t$ steps with the weights $(0.5+\Delta w,0.5-\Delta w)$ and $(0.5-\Delta w,0.5+\Delta w)$ in different order, obtaining checkpoints $\theta_{12}$ and $\theta_{21}$.
Therefore, checkpoint $\theta_{12}$ is obtained using the following training schedule $w^{12}_{i}(t)$:
\begin{align}\label{w12_schedule}
\begin{split}
    w_1^{12}(t) = 
    \begin{cases}
        0.5,                 & t\in [0, t_0), \\
        0.5 + \Delta w,      & t\in [t_0, t_0+\Delta t),\\
        0.5 - \Delta w,      & t\in [t_0+\Delta t, \\ & \quad\quad\ \  t_0+2\Delta t);\\
    \end{cases}
    &\ \ \ \ 
    w_2^{12}(t)= 
    \begin{cases}
        0.5,                 & t\in [0, t_0), \\
        0.5 - \Delta w,      & t\in [t_0, t_0+\Delta t),\\
        0.5 + \Delta w,      & t\in [t_0+\Delta t, \\ & \quad\quad\ \  t_0+2\Delta t);\\
    \end{cases}
\end{split}
\end{align}
Note that the total number of English tokens and respectively, Russian tokens used for training checkpoints $\theta_{12}, \theta_{21}$ and $\theta(t+2\Delta t)$ are the same.

Because of the irreducibly noisy nature of stochastic optimization, we use rather large $\Delta t$ which is $4000$ training steps with batch size of $512$.
The starting checkpoints for these interventions are from training step $2000, 8000, 14000, 20000$.
We conducted experiments with three options for degree of intervention $\Delta w$: $0.15, 0.3 \text{ and } 0.45$.
The results are in figure~\ref{fig:llm_order_matters}.
We see that: (i) Changes in weight schedule affect the training results even when the total amount of training tokens for both languages is fixed; (ii) For small initial training step the large interventions have negative impact on training; (iii) For bigger training steps, the results of training with intervention form well-behaved Pareto fronts; (iv) Generally, the checkpoint with the modified schedule is better, compared to the baseline, on a language for which we increased the proportion at the end and worse for another language, e.g. $w^{12}$ is better on Russian and worse on English.
    

\begin{table}[!t]
\caption{
    Commutation results for LLM. Losses $L_1, L_2$ are NLL on English and Russian domains. Here $EL^{12}_i$ columns mean the excess loss $L_i(\theta_{12})-L_i(\theta_{base})$ which measures the influence of reordering domains during training. The latter two columns are values $\mathrm{P}(L_2, L_1; L_i)$, which (up to rescaling) are the proxy for $EL_i^{21}$ and $-EL_i^{12}$. 
}
\centering
\begin{tabular}{lllllllll}
\toprule
$t$ & $L_1$ & $L_2$ & $EL^{12}_1$ & $EL^{12}_2$ & $EL^{21}_1$ & $EL^{21}_2$ & $\mathrm{P}_{2,1;1}$ & $\mathrm{P}_{2,1;2}$ \\
\midrule
  2000 &   3.44 &   3.04 &  -0.0191 &   0.0286 &   0.0352 &  -0.0289 &   0.0195 &  -0.0289 \\
  8000 &   3.32 &   2.92 &  -0.0201 &   0.0179 &   0.0129 &  -0.0211 &  -0.0020 &  -0.0074 \\
 14000 &   3.25 &   2.86 &  -0.0065 &   0.0169 &   0.0168 &  -0.0151 &   0.0014 &  -0.0048 \\
 20000 &   3.21 &   2.81 &  -0.0038 &   0.0049 &   0.0131 &  -0.0079 &   0.0049 &  -0.0072 \\
\bottomrule
\end{tabular}
\label{tab:llm_results}
\end{table}

Next, we match these results with theoretical predictions.
The excess loss formula~(\ref{excess_loss_p}) 
cannot be computed directly, as the Hessian is computationally intractable.
However, the Hessian-vector product can be computed with about the same complexity as the gradient computation (precisely, one needs two times more memory and two times more compute for this operation)~\cite{pearlmutter1994fast, dagreou2024compute}.
To overcome the noise, we average over 100 batches for domain gradients computation, and average over 100 batches for further Hessian-vector product computation.
To apply results from Section~\ref{sec:vec_com}, we consider the (discrete time) gradient descent equation $\theta(t+1)=\theta(t)-\gamma \sum_k w_k\nabla L_k(\theta(t))$, where $\gamma$ is the \textit{learning rate}, as a discretization of ODE~\ref{grad_ode} with additional factor of $\gamma$ in the right-hand side. 
Therefore, we adjust $\mathrm{P}(L_1, L_2; L)$ by $\gamma^2$.
The results are in table~\ref{tab:llm_results}.
We see that for three of four checkpoints the direction of loss change from the intervention are predicted correctly, though the absolute value is rather noisy.
We provide the results of $\mathrm{P}(L_1, L_2; L_i)$ calculation for other checkpoints during training, in table~\ref{tab:llm_predictions} in Appendix.

The discrepancy between the predicted and actual values may come from: (1) the influence of Adam optimizer, (2) the influence of stochasticity and noise in gradient computation, and (3) non-locality: 4000 + 4000 steps for intervention is significant, compared even to the whole length of pre-training (28000 steps for the last checkpoint).
Despite the fact that we could not yet check it, we hypothesize that for the checkpoints, where we have both $\mathrm{P}(L_2, L_1; L_1)$ and $\mathrm{P}(L_2, L_1; L_2)$ negative, a more subtle experiment would show 

that the reordering may have positive effect both of English and Russian performance simultaneously.

The results of similar experiment with training bi-lingual LLM but in case of domain size imbalance can be found in Appendix~\ref{sec:llm_imbalance}.
In that case we show that changing the weight schedule may give benefit to the low-resource language while having negligible decrease in the high-resource one.

\textit{Computational complexity.} 
We have shown that despite our theoretical analysis is local, the predictions can be still valid on huge time windows (8000 time steps).
Therefore, to employ the method in practice (e.g.~to apply for online data mixing), one does not need to recalculate $\mathrm{P}(L_i-L_j,\Sigma_k w_k(t)L_k; L)$ too often.
The rough estimate gives that if we calculate it once in 1000 steps (these steps are with batch size 512 for this experiment), and take as in this section, averaging over 600 examples for gradient and Hessian-vector product calculation, the computational overhead would be less than half of percent.
However, for large amount of domains, it may become significant.




\section{Related Work}

The multi-domain learning addresses the problem of designing a model such that it can properly handle examples coming from different data sources with varied characteristics, thus, improving its generalization or applicability. To learn simultaneously on several domains, we need to solve a multi-objective problem~\cite{MauerBenefitMulti, jeanMultiGrad} or average the losses across domains with some weights $w_k$~\cite{UberNet}. The classic approach to choosing the weights is scalarization, where weights are set to some constants. However, the choice of these constants is generally a hard optimization problem~\cite{royer2024scalarization}. Another option is adaptive methods that dynamically adapt the weights, i.e. use some weight schedule during training. These methods are divided into loss-based and gradient-based. Loss-based methods compute $w_k(t)$ to make the training uniform among different losses, see e.g. \cite{LiuImpartMulti}. Gradient-based methods use gradient information for each domain to re-compute weights needing several backward operations for each step, see e.g. \cite{RotoGrad, ParetoMultitask, rabinovich17, PickASign}.

In our work, we rely on the Lie bracket computation of loss vector fields to conduct our analysis. It was previously used by~\cite{dherin2023implicit} for backward error analysis in a continual learning scenario. The work~\cite{marcotte2024abide} in some sense complements our work by trying to find, what is invariant (rather than changed) under the altering in domain weights. To do it, they construct a Lie algebra generated by domain gradient vector fields and find the conservation laws.
The computation of the Lie bracket in our case leads to the need for Hesian-vector product computation. We note that Hessian computations occur in various fields of Machine Learning, including optimization~\cite{liu1989limited, nocedal1999numerical}, loss landscape analysis~\cite{cooper2018loss}, uncertainty estimation~\cite{daxberger2021laplace}, and bi-level optimization~\cite{chen2022gradient, franceschi2018bilevel}.

The influence of data from different domains on a model's performance is also an active research topic.
The common approach here is to utilize influence functions~\cite{koh2017understanding}, or some of their approximations, like TracIn~\cite{pruthi2020estimating, schioppa2022scaling}. 
Note that despite the visual similarity of the influence formula~\cite{koh2017understanding} and our formula~\ref{excess_loss}, the difference is that our formula has Hessian rather than inverse Hessian.

\section{Conclusion}
With the rise of Large Language Models (LLM), it becomes clear that the data is crucial to the LLMs' success. Much research is devoted to analyzing best practices for dataset
collection, namely cleaning, filtering, domain and language proportion, etc. Much less attention is
paid to the training order, while practical evidence shows that the dataset order matters. We believe
that one reason is that the researchers usually use the simplified theoretical understanding
of the training process, supposing that the data is sampled from the overall data distribution throughout training. Meanwhile, in the case of the highly long training process on the diverse
multi-domain multi-lingual dataset, distribution stability is unlikely to be achieved. Therefore, a
novel theoretical framework closer to the realistic setting is needed. In this work, we 
step in this direction, proposing a theoretically motivated method to analyze and predict the influence
of the training order in a multi-domain setup. We demonstrate the soundness of our ideas in both toy
examples and realistic experiments. Our results on multi-lingual LLM pre-training illustrate the
importance of the raised problem and the ability of our theoretical framework to explain the observed
variation in the training results depending on the dataset order only.  

\subsection{Limitations}
Firstly, the method does not give an explicit algorithm to produce an optimal weight schedule.
It just answers, to which direction should we shift an existing sub-optimal one to improve it.

Secondly, our method currently does not handle the difference between optimizers, see remark in section~\ref{sec:vec_com}.
However, it is known that the choice of the optimizer can influence the training dynamics and lead to regions with different loss landscape~\cite{jiang2024does}.
Modifications of optimizers for multi-domain and multi-task learning~\cite{yang2023adatask} also would change the commutator formulae.

More importantly, we currently do not properly handle the stochasticity of the training process.
The training process is better described as a flow of stochastic differential equation $\mathrm{d}\theta = -\nabla L(\theta)\,\mathrm{d}t + \Omega^{\frac{1}{2}}(\theta)\,\mathrm{d}W$, where $W$ is the Wiener process.
Here $\Omega(\theta)$ measures the noise intensity of the stochastic gradient and is often crucial for learning.
As we have shown, the infinitesimal difference in the order of different domains would result in bias in the drift term (Theorem~\ref{thm_comm}), but also it would change noise intensity in the diffusion term.
The formal derivation of this dependency (using Ito or Stratonovitch paradigms, and Chen-Strichartz formula, see e.g.~\cite{baudoin2004introduction}), as well as its empirical validation, is our further direction.



\bibliographystyle{plain} 
\bibliography{refs} 

\begin{thebibliography}{10}

\bibitem{bai2023qwen}
Jinze Bai, Shuai Bai, Yunfei Chu, Zeyu Cui, Kai Dang, Xiaodong Deng, Yang Fan, Wenbin Ge, Yu~Han, Fei Huang, et~al.
\newblock Qwen technical report.
\newblock {\em arXiv preprint arXiv:2309.16609}, 2023.

\bibitem{baudoin2004introduction}
Fabrice Baudoin.
\newblock {\em An introduction to the geometry of stochastic flows}.
\newblock World Scientific, 2004.

\bibitem{chen2022gradient}
Can Chen, Xi~Chen, Chen Ma, Zixuan Liu, and Xue Liu.
\newblock Gradient-based bi-level optimization for deep learning: A survey.
\newblock {\em arXiv preprint arXiv:2207.11719}, 2022.

\bibitem{rabinovich17}
Zhao Chen, Vijay Badrinarayanan, Chen-Yu Lee, and Andrew Rabinovich.
\newblock Gradnorm: Gradient normalization for adaptive loss balancing in deep multitask networks.
\newblock In {\em International conference on machine learning}, pages 794--803. PMLR, 2018.

\bibitem{PickASign}
Zhao Chen, Jiquan Ngiam, Yanping Huang, Thang Luong, Henrik Kretzschmar, Yuning Chai, and Dragomir Anguelov.
\newblock Just pick a sign: Optimizing deep multitask models with gradient sign dropout.
\newblock {\em Advances in Neural Information Processing Systems}, 33:2039--2050, 2020.

\bibitem{choi2024order}
Dami Choi, Derrick Xin, Hamid Dadkhahi, Justin Gilmer, Ankush Garg, Orhan Firat, Chih-Kuan Yeh, Andrew~M Dai, and Behrooz Ghorbani.
\newblock Order matters in the presence of dataset imbalance for multilingual learning.
\newblock {\em Advances in Neural Information Processing Systems}, 36, 2024.

\bibitem{cooper2018loss}
Yaim Cooper.
\newblock The loss landscape of overparameterized neural networks.
\newblock {\em arXiv preprint arXiv:1804.10200}, 2018.

\bibitem{dagreou2024compute}
Mathieu Dagr{\'e}ou, Pierre Ablin, Samuel Vaiter, and Thomas Moreau.
\newblock How to compute hessian-vector products?
\newblock In {\em The Third Blogpost Track at ICLR 2024}, 2024.

\bibitem{daxberger2021laplace}
Erik Daxberger, Agustinus Kristiadi, Alexander Immer, Runa Eschenhagen, Matthias Bauer, and Philipp Hennig.
\newblock Laplace redux-effortless bayesian deep learning.
\newblock {\em Advances in Neural Information Processing Systems}, 34:20089--20103, 2021.

\bibitem{jeanMultiGrad}
Jean-Antoine D{\'e}sid{\'e}ri.
\newblock Multiple-gradient descent algorithm (mgda) for multiobjective optimization.
\newblock {\em Comptes Rendus Mathematique}, 350(5-6):313--318, 2012.

\bibitem{dherin2023implicit}
Benoit Dherin.
\newblock Implicit biases in multitask and continual learningfrom a backward error analysis perspective.
\newblock In {\em NeurIPS 2023 Workshop on Mathematics of Modern Machine Learning}, 2023.

\bibitem{franceschi2018bilevel}
Luca Franceschi, Paolo Frasconi, Saverio Salzo, Riccardo Grazzi, and Massimiliano Pontil.
\newblock Bilevel programming for hyperparameter optimization and meta-learning.
\newblock In {\em International conference on machine learning}, pages 1568--1577. PMLR, 2018.

\bibitem{he2023gradual}
Yifei He, Haoxiang Wang, Bo~Li, and Han Zhao.
\newblock Gradual domain adaptation: Theory and algorithms, 2023.

\bibitem{RotoGrad}
Adri{\'a}n Javaloy and Isabel Valera.
\newblock Rotograd: Gradient homogenization in multitask learning.
\newblock {\em arXiv preprint arXiv:2103.02631}, 2021.

\bibitem{jiang2024does}
Kaiqi Jiang, Dhruv Malik, and Yuanzhi Li.
\newblock How does adaptive optimization impact local neural network geometry?
\newblock {\em Advances in Neural Information Processing Systems}, 36, 2024.

\bibitem{kingma2014adam}
Diederik~P Kingma and Jimmy Ba.
\newblock Adam: A method for stochastic optimization.
\newblock {\em arXiv preprint arXiv:1412.6980}, 2014.

\bibitem{koh2017understanding}
Pang~Wei Koh and Percy Liang.
\newblock Understanding black-box predictions via influence functions.
\newblock In {\em International conference on machine learning}, pages 1885--1894. PMLR, 2017.

\bibitem{UberNet}
Iasonas Kokkinos.
\newblock Ubernet: Training a universal convolutional neural network for low-, mid-, and high-level vision using diverse datasets and limited memory.
\newblock In {\em Proceedings of the IEEE conference on computer vision and pattern recognition}, pages 6129--6138, 2017.

\bibitem{Kumar2020UnderstandingSF}
Ananya Kumar, Tengyu Ma, and Percy Liang.
\newblock Understanding self-training for gradual domain adaptation.
\newblock {\em ArXiv}, abs/2002.11361, 2020.

\bibitem{lee2012smooth}
John~M Lee.
\newblock {\em Smooth manifolds}.
\newblock Springer, 2012.

\bibitem{lee2022sequential}
Seanie Lee, Hae~Beom Lee, Juho Lee, and Sung~Ju Hwang.
\newblock Sequential reptile: Inter-task gradient alignment for multilingual learning.
\newblock In {\em International Conference on Learning Representations}, 2022.

\bibitem{ParetoMultitask}
Xi~Lin, Hui-Ling Zhen, Zhenhua Li, Qing-Fu Zhang, and Sam Kwong.
\newblock Pareto multi-task learning.
\newblock {\em Advances in neural information processing systems}, 32, 2019.

\bibitem{liu1989limited}
Dong~C Liu and Jorge Nocedal.
\newblock On the limited memory bfgs method for large scale optimization.
\newblock {\em Mathematical programming}, 45(1):503--528, 1989.

\bibitem{LiuImpartMulti}
Liyang Liu, Yi~Li, Zhanghui Kuang, J~Xue, Yimin Chen, Wenming Yang, Qingmin Liao, and Wayne Zhang.
\newblock Towards impartial multi-task learning.
\newblock In {\em ICLR}. iclr, 2021.

\bibitem{loshchilov2017decoupled}
I~Loshchilov and F~Hutter.
\newblock Decoupled weight decay regularization.
\newblock {\em arXiv preprint arXiv:1711.05101}, 2017.

\bibitem{marcotte2024abide}
Sibylle Marcotte, R{\'e}mi Gribonval, and Gabriel Peyr{\'e}.
\newblock Abide by the law and follow the flow: Conservation laws for gradient flows.
\newblock {\em Advances in neural information processing systems}, 36, 2024.

\bibitem{MauerBenefitMulti}
Andreas Maurer, Massimiliano Pontil, and Bernardino Romera-Paredes.
\newblock The benefit of multitask representation learning.
\newblock {\em Journal of Machine Learning Research}, 17(81):1--32, 2016.

\bibitem{pmlr-v162-navon22a}
Aviv Navon, Aviv Shamsian, Idan Achituve, Haggai Maron, Kenji Kawaguchi, Gal Chechik, and Ethan Fetaya.
\newblock Multi-task learning as a bargaining game.
\newblock In Kamalika Chaudhuri, Stefanie Jegelka, Le~Song, Csaba Szepesvari, Gang Niu, and Sivan Sabato, editors, {\em Proceedings of the 39th International Conference on Machine Learning}, volume 162 of {\em Proceedings of Machine Learning Research}, pages 16428--16446. PMLR, 17--23 Jul 2022.

\bibitem{nocedal1999numerical}
Jorge Nocedal and Stephen~J Wright.
\newblock {\em Numerical optimization}.
\newblock Springer, 1999.

\bibitem{pearlmutter1994fast}
Barak~A Pearlmutter.
\newblock Fast exact multiplication by the hessian.
\newblock {\em Neural computation}, 6(1):147--160, 1994.

\bibitem{pruthi2020estimating}
Garima Pruthi, Frederick Liu, Satyen Kale, and Mukund Sundararajan.
\newblock Estimating training data influence by tracing gradient descent.
\newblock {\em Advances in Neural Information Processing Systems}, 33:19920--19930, 2020.

\bibitem{raffel2020exploring}
Colin Raffel, Noam Shazeer, Adam Roberts, Katherine Lee, Sharan Narang, Michael Matena, Yanqi Zhou, Wei Li, and Peter~J Liu.
\newblock Exploring the limits of transfer learning with a unified text-to-text transformer.
\newblock {\em Journal of machine learning research}, 21(140):1--67, 2020.

\bibitem{royer2024scalarization}
Amelie Royer, Tijmen Blankevoort, and Babak Ehteshami~Bejnordi.
\newblock Scalarization for multi-task and multi-domain learning at scale.
\newblock {\em Advances in Neural Information Processing Systems}, 36, 2024.

\bibitem{schioppa2022scaling}
Andrea Schioppa, Polina Zablotskaia, David Vilar, and Artem Sokolov.
\newblock Scaling up influence functions.
\newblock In {\em Proceedings of the AAAI Conference on Artificial Intelligence}, volume~36, pages 8179--8186, 2022.

\bibitem{touvron2023llama}
Hugo Touvron, Louis Martin, Kevin Stone, Peter Albert, Amjad Almahairi, Yasmine Babaei, Nikolay Bashlykov, Soumya Batra, Prajjwal Bhargava, Shruti Bhosale, et~al.
\newblock Llama 2: Open foundation and fine-tuned chat models.
\newblock {\em arXiv preprint arXiv:2307.09288}, 2023.

\bibitem{wang2021gradient}
Zirui Wang, Yulia Tsvetkov, Orhan Firat, and Yuan Cao.
\newblock Gradient vaccine: Investigating and improving multi-task optimization in massively multilingual models.
\newblock In {\em International Conference on Learning Representations}, 2021.

\bibitem{xie2022power}
Zeke Xie, Qian-Yuan Tang, Yunfeng Cai, Mingming Sun, and Ping Li.
\newblock On the power-law hessian spectrums in deep learning.
\newblock {\em arXiv preprint arXiv:2201.13011}, 2022.

\bibitem{xue2020mt5}
L~Xue, N~Constant, A~Roberts, M~Kale, R~Al-Rfou, A~Siddhant, A~Barua, and C~Raffel.
\newblock mt5: A massively multilingual pre-trained text-to-text transformer.
\newblock {\em arXiv preprint arXiv:2010.11934}, 2020.

\bibitem{yang2023adatask}
Enneng Yang, Junwei Pan, Ximei Wang, Haibin Yu, Li~Shen, Xihua Chen, Lei Xiao, Jie Jiang, and Guibing Guo.
\newblock Adatask: A task-aware adaptive learning rate approach to multi-task learning.
\newblock In {\em Proceedings of the AAAI conference on artificial intelligence}, volume~37, pages 10745--10753, 2023.

\bibitem{NEURIPS2020_3fe78a8a}
Tianhe Yu, Saurabh Kumar, Abhishek Gupta, Sergey Levine, Karol Hausman, and Chelsea Finn.
\newblock Gradient surgery for multi-task learning.
\newblock In H.~Larochelle, M.~Ranzato, R.~Hadsell, M.F. Balcan, and H.~Lin, editors, {\em Advances in Neural Information Processing Systems}, volume~33, pages 5824--5836. Curran Associates, Inc., 2020.

\bibitem{zhuang2024gradual}
Zhan Zhuang, Yu~Zhang, and Ying Wei.
\newblock Gradual domain adaptation via gradient flow.
\newblock In {\em The Twelfth International Conference on Learning Representations}, 2024.

\end{thebibliography}



\appendix

\section{Proofs}\label{appendix:proofs}

\begin{proof}[Proof of Theorem~\ref{thm_comm}]
This is actually a classic computation in differential geometry~(see e.g.~\cite{lee2012smooth}).
First, we write a formula for the flow:
\[
    \Phi_1(t_1)x = x + t_1 v_1(x) + \frac{t_1^2}{2} \dot{v}_1(x)v_1(x) + o(t_1^2) = x + t_1 v_1(x) + \frac{t_1^2}{2}\left(\nabla v_1 \cdot v_1\,(x)\right) + o(t_1^2)
\]
where $\nabla v_1 \cdot v_1\,(x) = \sum_i \frac{\partial v_1}{\partial x_i}v_{1\,i}(x)$.
Then the composition of two flows is
\begin{align*}
\Phi_2(t_2)\circ \Phi_1(t_1)x &= 
    \Phi_2(t_2)(x + t_1 v_1(x) + \frac{t_1^2}{2}\left(\nabla v_1 \cdot v_1\,(x)\right) + o(t_1^2)) = \nonumber \\
  &= \begin{aligned}[t]
    & x + t_1 v_1(x) + \frac{t_1^2}{2}\left(\nabla v_1 \cdot v_1\,(x)\right) + o(t_1^2) + \\
    & + t_2 v_2(x + t_1 v_1(x) + \frac{t_1^2}{2}\left(\nabla v_1 \cdot v_1\,(x)\right) + o(t_1^2)) + \\
    & + \frac{t_2^2}{2}(\nabla v_2\cdot v_2\,(x + t_1 v_1(x) + \frac{t_1^2}{2} (\nabla v_1\cdot v_1\,(x)) + o(t_1^2))) \cdot \\
    &\quad \cdot v_2(x + t_1 v_1(x) + \frac{t_1^2}{2}\left(\nabla v_1 \cdot v_1\,(x)\right) + o(t_1^2)) + o(t_2^2) = 
    \end{aligned} \nonumber \\
  &= \begin{aligned}[t]
    & x + t_1 v_1(x) + \frac{t_1^2}{2} \left(\nabla v_1 \cdot v_1\,(x)\right) + o(t_1^2) + \\
    & + t_2 v_2(x) + t_1t_2 \left(\nabla v_2 \cdot v_1\,(x)\right) + o(t_1t_2) + \\
    & + \frac{t_2^2}{2}\left(\nabla v_2 \cdot v_2\,(x)\right) + o(t_2^2)
    \end{aligned} \nonumber
\end{align*}
So, the commutator of two flows is
\[
\begin{aligned}[t]
    \Phi_1(t_1)\circ \Phi_2(t_2)x - \Phi_2(t_2)\circ \Phi_1(t_1)x 
        &= t_1t_2 (\left(\nabla v_2 \cdot v_1\,(x)\right) - \left(\nabla v_1 \cdot v_2\,(x)\right)) =\\
        &\quad= t_1t_2 (\Hess L_2 \nabla L_1 - \Hess L_1 \nabla L_2) + o(t_1 t_2)
\end{aligned}
\]
\end{proof}

\begin{proof}[Proof of Corollary~\ref{corollary}]

    Part \textit{(a)} is again a direct computation from the following formula:
    \[
    \begin{aligned}[t]
        \theta(t_0+\tau) =& \theta(t_0) + 
        \sum_k \left(\int_0^\tau w_k(t_0+\tau_1)\mathrm{d}\tau_1\right) \nabla L_k(\theta_0) + \\ 
        +& \sum_{k_1, k_2} \left( \int_0^\tau \int_0^{\tau_1} w_{k_1}(t_0+\tau_1)w_{k_2}(t_0+\tau_2)\mathrm{d}\tau_1\mathrm{d}\tau_2  \right) \Hess L_{k_1}\ \nabla L_{k_2} \ + o(\tau^2)
    \end{aligned}
    \]
    
    For part \textit{(b)} suppose the contrary, and without loss of generality, suppose $\mathrm{P}(L_i,L_j;L)(\theta(t))>0$.
    Take $w^\varepsilon$ from part \textit{(a)}, with $\delta=\frac{\min(w_i, w_j)}{2}>0$.
    This definition implies that $w^\varepsilon$ is right-continuous and non-negative, when $\varepsilon$ is small enough, therefore, it is a valid training schedule.
    Denote by $\theta^\varepsilon$ the training trajectory of the modified flow.
    Then, by formula~\ref{excess_loss}, we have $L(\theta^\varepsilon(t+2\varepsilon)) - L(\theta(t+2\varepsilon)) = -\frac{1}{2}\varepsilon^2\, \delta\,\mathrm{P}(L_i-L_j,\Sigma_k w_k(t)L_k;L)(\theta(t)) + o(\varepsilon^2)$ which is negative for small $\varepsilon$, leading to suboptimality of initial trajectory.
\end{proof}

\section{Bi-lingual LLM pre-training with language imbalance}\label{sec:llm_imbalance}

\begin{figure}
    \centering
    \includegraphics[width=0.8\linewidth]{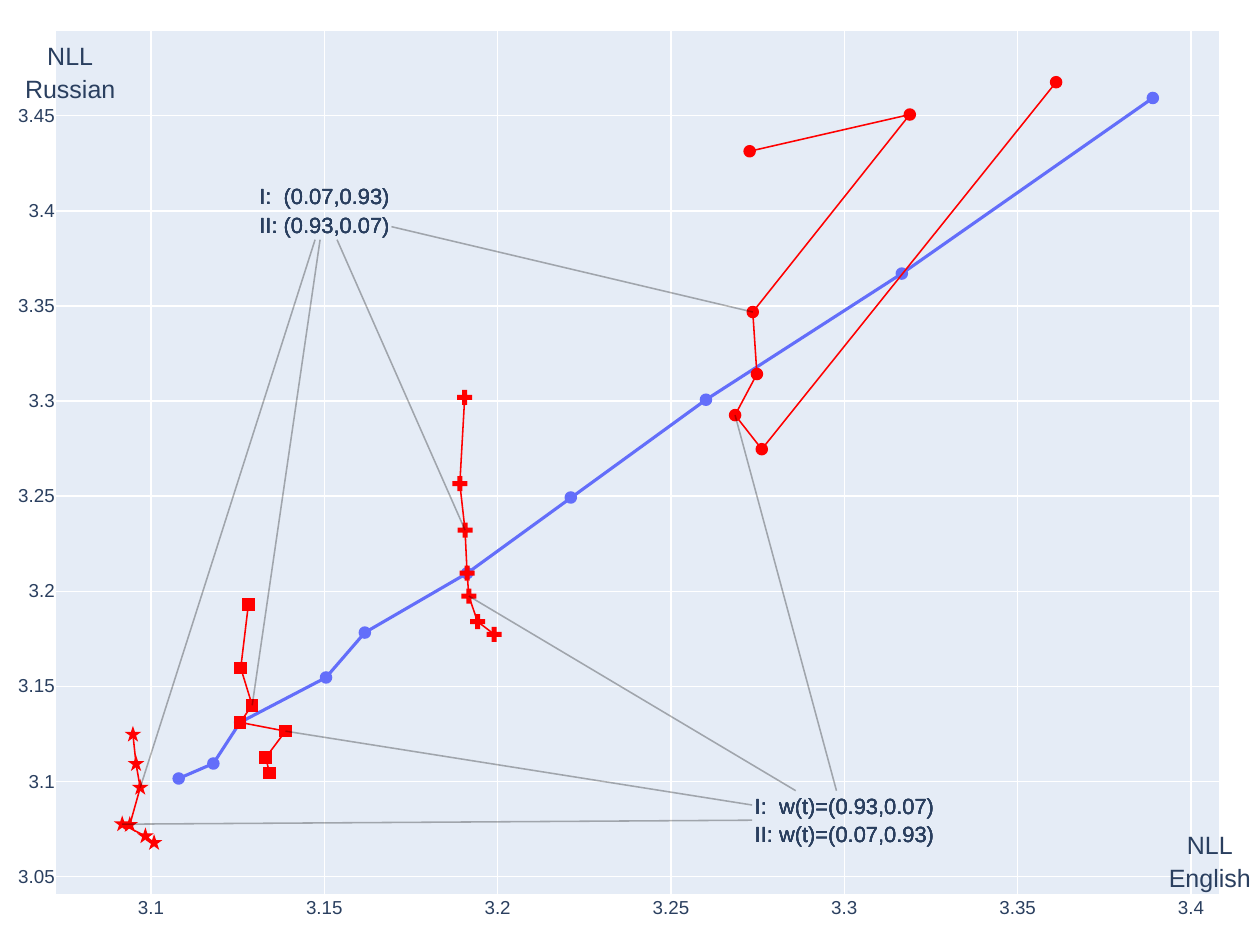}
    \caption{The results of intervening into domain weight schedule for bi-lingual LLM training with language imbalance. The blue points constitute loss values for a training trajectory with constant domain weight schedule $w(t)=(0.9,0.1)$, from a checkpoint at step 4000 (right, top) to step 25000 (bottom, left). The red points correspond to weight schedules from formula~(\ref{w12_schedule}). The total amount of English and Russian tokens used for training is constant across each series of points: 
    (9000, 1000), (14400, 1600), (19800, 2200), (24200, 2800) for 
    {\small \color{red} \ding{108}},
    {\small \color{red} \ding{ 58}},
    {\small \color{red} \ding{110}} and 
    {\small \color{red} \ding{ 72}}, respectively; only the degree of intervention $\Delta w$ changes from $0.03$ for annotated points to $0.09$ for the most distant points.}
    \label{fig:llm_order_matters_90_10}
\end{figure}

In section~\ref{sec:llm_balance} we have shown the influence of bi-lingual LLM scores on the domain weight schedule.
Also, the theoretical formula for predicting this influence is validated.
However, for the case of equal-sized domains the practical applicability of our method is limited: figure~\ref{fig:llm_order_matters} suggests that the order change can cause the shift of the point along the red curves, but it is unclear, whether we could get a point with lower $L = \frac{L_1+L_2}{2}$.

To address this, we perform a similar experiment but with domain imbalance: the model is trained with 90\% of English and 10\% of Russian tokens.
Exactly the same setting is repeated, checkpoint $\theta_{12}^{t_0, \Delta t, \Delta w}$ is obtained using the following training schedule $w^{12, t_0, \Delta t, \Delta w}(t)$:
\begin{align}\label{w12_schedule_90_10}
\begin{split}
    w_1^{12, t_0, \Delta t, \Delta w}(t)= 
    \begin{cases}
        0.9,                 & \text{if } t\in [0, t_0), \\
        0.9 + \Delta w,      & \text{if } t\in [t_0, t_0+\Delta t),\\
        0.9 - \Delta w,      & \text{if } t\in [t_0+\Delta t, t_0+2\Delta t);\\
    \end{cases}
    \\
    w_2^{12, t_0, \Delta t, \Delta w}(t)= 
    \begin{cases}
        0.1,                 & \text{if } t\in [0, t_0), \\
        0.1 - \Delta w,      & \text{if } t\in [t_0, t_0+\Delta t),\\
        0.1 + \Delta w,      & \text{if } t\in [t_0+\Delta t, t_0+2\Delta t).\\
    \end{cases}
\end{split}
\end{align}

\begin{table}
\caption{
    {Excess losses and predicted excess losses for LLM pre-training with language imbalance.
    }
}
\centering
\begin{tabular}{lllllllll}
\toprule
$t$ & $L_1$ & $L_2$ & $EL^{12}_1$ & $EL^{12}_2$ & $EL^{21}_1$ & $EL^{21}_2$ & $\mathrm{P}_{2,1;1}$ & $\mathrm{P}_{2,1;2}$ \\
\midrule
  2000 &   3.26 &   3.70 &  -0.0006 &   0.0352 &   0.0037 &  -0.0394 &   0.0225 &  -0.1454 \\
  8000 &   3.18 &   3.48 &  -0.0010 &   0.0216 &   0.0023 &  -0.0297 &   0.0053 &  -0.0076 \\
 14000 &   3.12 &   3.33 &  -0.0012 &   0.0174 &   0.0009 &  -0.0233 &   0.0027 &  -0.0048 \\
 20000 &   3.09 &   3.22 &  -0.0017 &   0.0144 &   0.0007 &  -0.0199 &   0.0020 &  -0.0317 \\
\bottomrule
\end{tabular}
\label{tab:llm_results_90_10}
\end{table}


The options for the degree of intervention $\Delta w$ are $0.03, 0.06 \text{ and } 0.09$.
We show the results in figure~\ref{fig:llm_order_matters_90_10} and match them with predicted in table~\ref{tab:llm_results_90_10}.

It can be seen that the points in each group form almost vertical lines (with presence of outliers).
It means that by changing the training order we could get a significant increase in low-resource language performance having negligible decrease in the high-resource language performance.
Therefore, our theory and this experiment justify that in the situation of language imbalance, it is better to increase the proportion of low-resource language only at the end of the pre-training.

\section{Additional results for bilingual LLM}

See table~\ref{tab:llm_predictions}.
\begin{table}
\caption{
    $P_{12;i}$ mean the predicted infinitesimal excess loss $P(L_1, L_2; L_i)$. 
}
\centering
\begin{minipage}{0.45\textwidth}
\centering
\begin{tabular}{lll}
\toprule
$t$ & $P_{12;1}$ & $P_{12;2}$ \\
\midrule
 2000 &   0.0195 &  -0.0289 \\
 4000 &   0.0056 &  -0.0344 \\
 6000 &  -0.0001 &  -0.0127 \\
 8000 &  -0.0020 &  -0.0074 \\
10000 &   0.0010 &  -0.0038 \\
12000 &   0.0009 &  -0.0050 \\
14000 &   0.0014 &  -0.0048 \\
16000 &   0.0040 &  -0.0039 \\
18000 &  -0.0054 &  -0.0067 \\
20000 &   0.0049 &  -0.0072 \\
22000 &  -0.0043 &  -0.0149 \\
24000 &  -0.0004 &  -0.0141 \\
\bottomrule
\end{tabular}
    \caption*{(a) balanced domains, see section~\ref{sec:llm_balance}}
\end{minipage}
\hfill
\begin{minipage}{0.45\textwidth}
\centering
\begin{tabular}{lll}
\toprule
$t$ & $P_{12;1}$ & $P_{12;2}$ \\
\midrule
 2000 &   0.0225 &  -0.1454 \\
 4000 &   0.0059 &  -0.0600 \\
 6000 &  -0.0233 &  -0.0216 \\
 8000 &   0.0053 &  -0.0076 \\
10000 &   0.0043 &  -0.0167 \\
12000 &  -0.0029 &  -0.0080 \\
14000 &   0.0027 &  -0.0048 \\
16000 &   0.0106 &  -0.0184 \\
18000 &   0.0004 &  -0.0221 \\
20000 &   0.0020 &  -0.0317 \\
22000 &  -0.0040 &  -0.0226 \\
24000 &   0.0060 &  -0.0396 \\
\bottomrule
\end{tabular}
    \caption*{(b) domain imbalance, see appendix~\ref{sec:llm_imbalance}}
\end{minipage}

\label{tab:llm_predictions}
\end{table}


\section{Toy example details}\label{appendix:toy_details}

\begin{figure}
    \centering
    \includegraphics[width=0.5\linewidth]{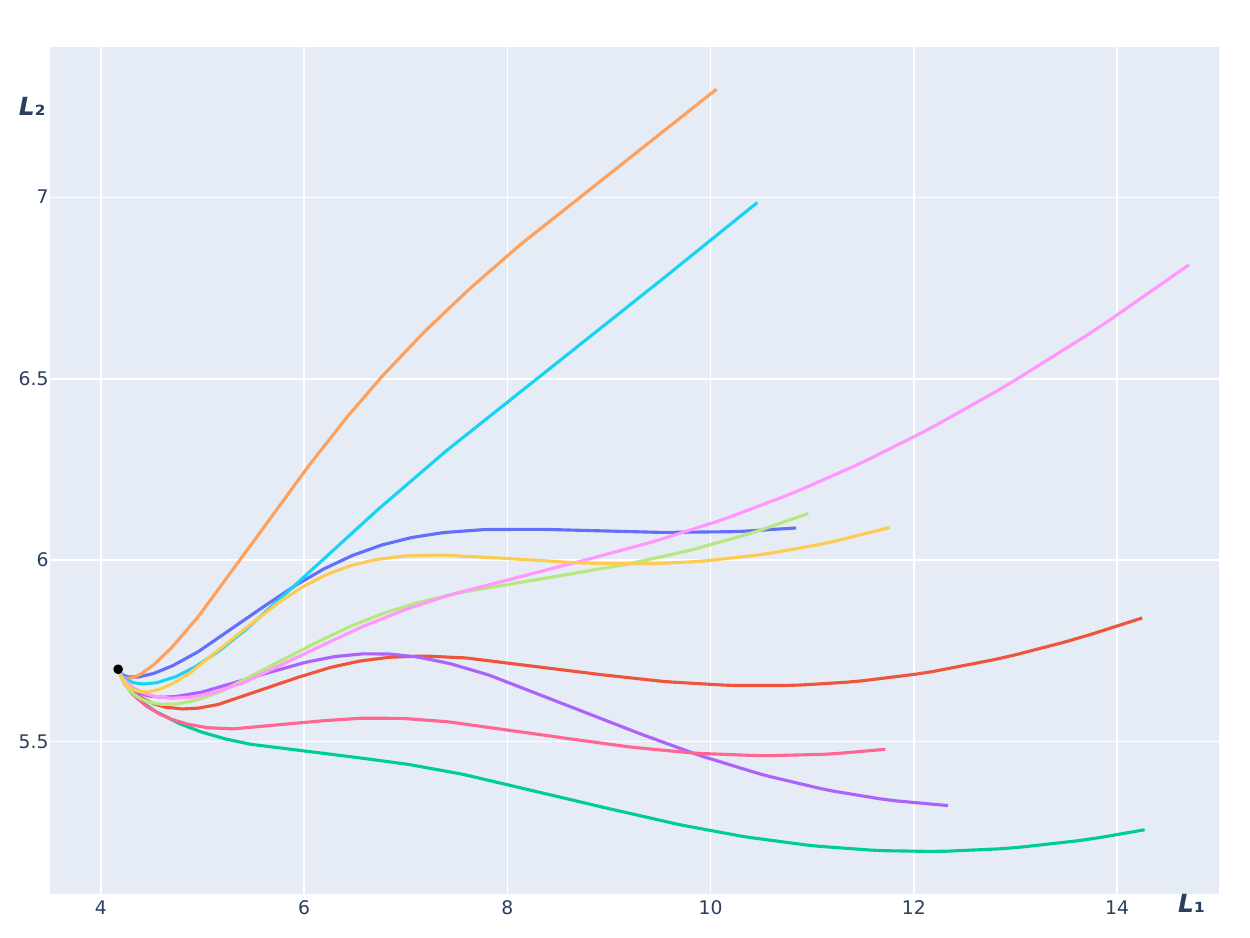}
    \caption{Domain losses dynamics. Here X and Y axes correspond to values of $L_1$ and $L_2$ and curves are trajectories for same constant weight schedule but different initial points. All trajectories converge to a single point which corresponds to the global optimum of $\frac{L_1+L_2}{2}$, though we see that loss behavior is non-monotonous for some of the trajectories.}
    \label{fig:trajectories_losses}
\end{figure}

To sample matrices $A_1, A_2$, we first determine their dimension $100$, and put eigenvalues $\Lambda = (\lambda_j)_{j=0}^{99}$ according to power law $\lambda_j = 0.7^j$, and then conjugate them with a random orthogonal matrices $C_1, C_2$: $A_i = C_i^T\,\mathrm{Diag}(\Lambda)\,C$.
Vectors $b_1, b_2$ are sampled from standard Gaussian distribution with unit standard deviation in each direction.

For table~\ref{tab:toy_results}, we sample starting point $\theta$ from Gaussian distribution, and analytically compute the flow of ODE~\ref{grad_ode} for $L=\frac{L_1+L_2}{2}$ by time $t$ and $t+2\Delta t$.
Then we compute flow for similar ODE for loss functions $L_1, L_2$ and apply them in order to get $\theta_{12}(t)$.
Excess loss will be the difference $L(\theta_{12}(t))-L(\theta(t+2\Delta t))$.
We compare it with prediction $P(L_1, L_2; L)$.
In the table we present the median and $90-10$ percentile range.

\section{LLM pre-training details}\label{appendix:llm_details}

Our setup follows classical GPT-2 model pre-training, using Megatron-Deepspeed repository\footnote{\url{https://github.com/microsoft/Megatron-DeepSpeed}}.
C4 and mC4 datasets are publicly available.
We used mBERT tokenizer.
Training was performed with global batch size 512, and sequence length 1024.
We used Adam optimizer with $\beta_1=0.9$ and $\beta_2=0.999$.
The learning rate was set to $1.5\times 10^{-4}$.

The Hessian-vector products were calculated using reverse-on-reverse method from~\cite{dagreou2024compute}.
For the final values of gradients and Hessian-vector products, we averaged over 600 random samples for each domain.


\end{document}